\pgfplotsset{compat=newest}
\definecolor{orange}{rgb}{1,0.4,0.0}
\DeclarePairedDelimiterXPP{\KL}[2]{D_{\textnormal{KL}}}{(}{)}{}{%
#1\:\delimsize\|\:#2% 
}
\DeclarePairedDelimiterXPP{\RD}[2]{D_{$\alpha$}}{(}{)}{}{%
#1\:\delimsize\|\:#2%
}
\DeclarePairedDelimiterXPP\Prob[1]{\mathbb{P}}{\lbrace}{\rbrace}{}{

#1}
\DeclarePairedDelimiterXPP{\lnorm}[2]{}{\lVert}{\rVert}{_{#2}}{#1}
\newcommand{\bE}{\ensuremath{\mathbb{E}}}
\newcommand{\bP}{\ensuremath{\mathbb{P}}}
\newcommand{\bQ}{\ensuremath{\mathbb{Q}}}
\newcommand{\bR}{\ensuremath{\mathbb{R}}}
\newcommand{\cA}{\ensuremath{\mathcal{A}}}
\newcommand{\cH}{\ensuremath{\mathcal{H}}}
\newcommand{\cN}{\ensuremath{\mathcal{N}}}
\newcommand{\cO}{\ensuremath{\mathcal{O}}}
\newcommand{\cX}{\ensuremath{\mathcal{X}}}
\newcommand{\mi}{\textup{I}}
\newcommand{\ent}{\textup{H}}
\newcommand{\argmax}{\ensuremath{\textnormal{argmax}}}
\newtheorem{proposition}{Proposition}
\newtheorem{theorem}{Theorem}
\newtheorem{corollary}{Corollary}
\newcommand{\innerproduct}[2]{\langle #1, #2 \rangle}
\newcommand{\kl}{\textup{D}_{\textnormal{KL}}}
\def\BibTeX{{\rm B\kern-.05em{\sc i\kern-.025em b}\kern-.08em
    T\kern-.1667em\lower.7ex\hbox{E}\kern-.125emX}}
\begin{document}

\title{An Information-Theoretic Analysis of Thompson Sampling with Infinite Action Spaces
\thanks{This work was supported by the Knut and Alice Wallenberg Foundation.}%, the Swedish Foundation for Strategic Research, and the Swedish Research Council under contract 2019-03606.}
}

\author{
\IEEEauthorblockN{Amaury Gouverneur, Borja Rodríguez-Gálvez, Tobias J. Oechtering, and Mikael Skoglund}
\IEEEauthorblockA{Division of Information Science and Engineering (ISE)\\
KTH Royal Institute of Technology\\
\texttt{\{amauryg, borjarg, oech, skoglund\}@kth.se}}
}

\maketitle

\begin{abstract}
This paper studies the Bayesian regret of the Thompson Sampling algorithm for bandit problems, building on the information-theoretic framework introduced by Russo and Van Roy~\cite{russo_information-theoretic_2015}. Specifically, it extends the rate-distortion analysis of Dong and Van Roy~\cite{dong_information-theoretic_2018}, which provides near-optimal bounds for linear bandits. A key limitation of these results is the assumption of a finite action space. We address this by extending the analysis to settings with infinite and continuous action spaces. Additionally, we specialize our results to bandit problems with expected rewards that are Lipschitz continuous with respect to the action space, deriving a regret bound that explicitly accounts for the complexity of the action space.
\end{abstract}

\begin{IEEEkeywords}
multi-armed bandit, Thompson Sampling, information theory, regret bounds, online optimization.
\end{IEEEkeywords}

\section{Introduction}

%Bandit problems are sequential decision problems in which agents interact with an unknown environment by choosing actions and earning rewards in return. The agent's goal is to maximize its expected cumulative reward, the expected sum of rewards it will earn throughout its interaction with the environment. This necessitates a delicate balance between exploring different actions to gather information for potential future rewards and exploiting known actions to receive immediate gains. Bandits have become an important subset of sequential decision-making problems due to their multiple applications in healthcare, finance, recommender systems, or telecommunications (see~\cite{bouneffouf_survey_2020} for a survey on different applications). The theoretical study of the performance of an algorithm in a bandit problem is done by analyzing the \emph{expected regret}, defined as the difference between the cumulative reward of the algorithm and the hypothetical cumulative reward that an oracle would obtain by choosing the optimal action at each time step~\cite{abe_reinforcement_2003, russo_tutorial_2018, chu_contextual_nodate, agrawal_optimistic_2023, foster_contextual_2018, foster_efcient_nodate,  zhang_feel-good_2021, neu_lifting_2022}. 

Bandit problems are a class of sequential decision-making problems where agents interact with an unknown environment by selecting actions and receiving rewards in return. The agent's objective is to maximize its expected cumulative reward, which is the expected sum of rewards obtained during its interaction with the environment. This requires balancing exploration—trying different actions to gather information for future rewards—with exploitation, where the agent leverages known actions to maximize its immediate gain. Bandit problems have attracted significant attention due to their wide range of applications in fields such as healthcare, finance, recommender systems, and telecommunications (see~\cite{bouneffouf_survey_2020,silva_multi-armed_2022} for surveys on various applications). The theoretical evaluation of an algorithm’s performance in bandit problems is typically done by studying the \emph{expected regret}, which measures the gap between the cumulative reward earned by the algorithm and the reward an oracle would achieve by always selecting the optimal action~\cite{abe_reinforcement_2003, russo_tutorial_2018, chu_contextual_2011, agrawal_optimistic_2023, foster_contextual_2018, foster_efficient_2021, zhang_feel-good_2021, neu_lifting_2022}.
One effective method for achieving small regret is the Thompson Sampling (TS) algorithm~\cite{thompson_likelihood_1933}, which, despite its simplicity, has demonstrated remarkable performance~\cite{russo_tutorial_2018,russo_learning_2017,chapelle_empirical_2011}.\\

Studying the Thompson Sampling regret, Russo and Van Roy~\cite{russo_information-theoretic_2015} introduced the concept of \emph{information ratio}. This statistic captures the trade-off between achieving immediate low regret and gaining information about the optimal action. Using this concept, they provide a general upper bound that depends on the prior entropy of the optimal action $\ent(A^\star)$. A limitation of this result is that the prior entropy of the optimal action can grow arbitrarily large with the number of actions or get infinite if the action space is continuous. Combining the previous techniques with a rate-distortion analysis, Dong and Van Roy~\cite{dong_information-theoretic_2018} were able to control the regret of Thompson Sampling via the entropy of a \emph{statistic} of the parameter $\Theta$.
%for finite action spaces $\cA$ that depends on the entropy of the optimal action $\ent(A^\star)$, the time horizon $T$ (the total number of times that the agent interacts with the environment), and a problem-dependent upper bound on the information-ratio $\Gamma$, namely $\sqrt{\Gamma\cdot T\cdot \ent(A^\star)}$. % A limitation of this approach is that the prior entropy of the optimal action, $\ent(A^\star)$, can grow arbitrarily large with the number of actions or get infinite if the action space is continuous. 
%Dong and Van Roy~\cite{dong_information-theoretic_2018} were able to control the regret of the TS algorithm via a "compressed statistic" $\Theta_\varepsilon$ of the parameter $\Theta$, with a bound of the form $\varepsilon\cdot T + \sqrt{\Gamma\cdot T\cdot \ent(\Theta_\varepsilon)}$.
This approach provides Bayesian regret bounds
that remain sharp with large action spaces. In particular, they derived a near-optimal regret rate of $O(d \sqrt{T \log T})$ for $d$-dimensional linear bandit problems. However, their results are limited to bandit problems with finite action and environment space.\\

In this work, we extend the results of~\cite{dong_information-theoretic_2018} to settings with infinite and continuous action and parameter spaces. For bandit problems where the expected rewards are Lipschitz continuous with respect to a metric action space, we derive a regret bound that explicitly depends on the complexity of the action space.

\section{Preliminaries}
\label{sec:preliminaries}

\subsection{General Notation}
\label{subsec:notation}

We denote random variables by capital letters (e.g., $X$), their specific realizations by lowercase letters (e.g., $x$), and their outcome spaces using calligraphic notation (e.g., $\cX$). The distribution of a random variable $X$ is represented as $\bP_X$.

\subsection{Problem setup}
\label{subsec:problem_setup}
We consider a sequential decision-making problem where, at each time step $t \in {1,\ldots,T}$, an agent interacts with the environment by selecting an action $A_t \in \cA$ from a set of available actions $\cA$. Based on the chosen action, the agent receives a real-valued reward $R_t \in \mathbb{R}$. The action-reward pair $(A_t, R_t)$ is then added to the history, updating it to $H^{t+1} = H^t \cup \{A_t, R_t\}$, which will be available for decision-making in the next round. This process continues until $t=T$. \\

In the Bayesian framework, the environment is characterized by a parameter $\theta \in \cO$, which is unknown to the agent and drawn from a known prior distribution $\bP_{\Theta}$. Together with the reward distribution $\bP_{R|A,\Theta}$, this prior fully defines the bandit problem. Since the reward depends on the chosen action and the parameters, it can be expressed as $R_t = R(A_t, \Theta)$ for some possibly random function $R: \cA \times \cO \to \bR$. \\

The goal of the agent is to select actions that maximize the total accumulated reward. Specifically, the agent aims to learn a policy $\pi = \{\pi_t: \cH^t \to \cA\}_{t=1}^T$ that, at each time $t \in \{1,\ldots,T \}$, selects an action $A_t$ based on the history $H^t$. The objective is to find a  policy that maximize the \emph{expected cumulative reward} $R_T(\pi) \coloneqq \bE \left[\sum_{t=1}^T R(\pi_t(H^t), \Theta)\right]$.\\

If the agent had access to the parameter $\Theta$, it could always select the action that maximizes the expected reward defined by the mapping $\pi_\star: \cO \rightarrow \cA$ as $\pi_\star(\theta) = \argmax_{a \in \cA} \bE[R(a, \theta)]$. We refer to such an action as the \emph{optimal action} 
$A^\star \coloneqq \pi_\star(\Theta)$. To ensure such a policy exists, we make the technical assumption that the set of actions $\cA$ is compact. Following~\cite{dong_performance_2019}, we assume that there exists a mapping $\pi_\star$ is one-to-one\footnote{Note that Thompson Sampling disregards actions that are not optimal for any parameter. If a particular action is optimal for several parameters, we can include duplicate versions in the action set to ensure a one-to-one correspondence between each parameter and its optimal action (and similarly if a particular parameter is optimal for several actions).}.

We evaluate the performance of a policy $\pi$ with its Bayesian regret, defined as:
\begin{equation*}
    \bE[\textnormal{Regret}(T)] \coloneqq \bE\left[ \sum_{t=1}^T R(A^\star, \Theta) - R(A_t, \Theta)\right],
\end{equation*}
where the actions $A_t$ are selected by the policy $\pi$, and the expectation is taken over the randomness of the action selection, the reward distribution, and the distribution of $\Theta$.

\subsection{Thompson Sampling algorithm}
\label{subsec:Thompson_Sampling_algorithm}

Thompson Sampling is an efficient algorithm for solving bandit problems. It works by selecting actions randomly based on their posterior probability of being optimal. Specifically, at each time step $t \in \{1, \ldots, T\}$, the agent samples a parameter estimate $\hat{\Theta}_t$ from the posterior distribution, conditioned on the history $H^t$. The agent then selects the action that is optimal for the sampled parameter, $A_t = \pi_\star(\hat{\Theta}_t)$, receives a reward $R_t$, and updates the history to $H^{t+1} = H^t \cup \{\hat{A}_t, R_t\}$.
The pseudocode for Thompson Sampling is given in Algorithm \ref{alg:Thompson_Sampling}.

\begin{algorithm}[ht]
    \caption{Thompson Sampling algorithm}
    \label{alg:Thompson_Sampling}
    \begin{algorithmic}[1]
        \STATE {\bfseries Input:} parameter prior $\bP_{\Theta}$.
        \FOR{$t=1$ {\bfseries to} T}
            \STATE Sample a parameter estimate $\smash{\hat{\Theta}_t \sim \bP_{\Theta|H^t}}$.
            \STATE Take the corresponding optimal action $A_t = \pi_\star(\hat{\Theta}_t)$.
            \STATE Collect the reward $R_t = R(A_t, \Theta)$.
            \STATE Update the history $H^{t+1} = H^t \cup \{A_t, R_t\}$.
        \ENDFOR
    \end{algorithmic}
\end{algorithm}

\subsection{Notations Specific to Bandit Problems}
\label{subsec:notations_specific}

Since the $\sigma$-algebras of the history are often used in conditioning, we introduce the notations $\bE_t[\cdot] \coloneqq \bE[\cdot | H^t]$ and $\bP_t[\cdot] \coloneqq \bP[\cdot | H^t]$ to denote the conditional expectation and probability given $H^t$. Additionally, we define $\mi_t(A^\star; R_t|A_t) \coloneqq \bE_t[\kl(\bP_{R_t | H^t, A^\star,A_t} \| \bP_{R_t | H^t,A_t})]$ as the disintegrated conditional mutual information between the optimal action $A^\star$ and the reward $R_t$ conditioned on action $A_t$, \emph{given the history} $H^t$.

\section{Information ratio and Compressed TS}
\label{sec:information_ratio}

In their analysis of Thompson Sampling for bandit problems, Russo and Van Roy~\cite{russo_information-theoretic_2015} introduced an important quantity, the \emph{information ratio} defined as the following random variable:

\begin{equation*}
    \Gamma_t(A^\star,A_t) \coloneqq \frac{\bE_t[ R(A^\star,\Theta) - R(A_t,\Theta)]^2}{\mi_t(A^\star; R(A_t,\Theta),A_t)}.
\end{equation*}
 This ratio measures the trade-off between minimizing the current squared regret and gathering information about the optimal action. In other words, a small information ratio implies that any significant regret has been compensated by a substantial gain in information about the optimal action.\\
 
Russo and Van Roy use this concept to provide a general regret bound of $\sqrt{\Gamma \cdot T \cdot \ent(A^\star)}$, which depends on the time horizon $T$, the entropy of the prior distribution of $A^\star$, and an algorithm- and problem-dependent upper bound $\Gamma$ on the average expected information ratio~\cite[Proposition 1]{russo_information-theoretic_2015}. A limitation of this approach is that the prior entropy of the optimal action, $\ent(A^\star)$, can grow arbitrarily large with the number of actions or get infinite if the action space is continuous. Dong and Van Roy~\cite{dong_information-theoretic_2018} extended this analysis by controlling the regret of the Thompson Sampling through the regret of an approximate learning they refer to as \emph{one-step compressed} Thompson Sampling. Under a continuity assumption of the expected reward with respect to the action space, they upper bound the regret via a \emph{compressed statistic} $\Theta_\varepsilon$ of the parameter $\Theta$, along with the information ratio of the one-step compressed Thompson Sampling defined as  
\begin{equation*}
    \tilde{\Gamma}_t(\tilde{\Theta}_t^\star,\tilde{\Theta}_t) \coloneqq \frac{\bE_t[ R(\pi_\star(\tilde{\Theta}_t^\star),\Theta) - R(\pi_\star(\tilde{\Theta}_t),\Theta)]^2}{\mi_t(\tilde{\Theta}_t^\star; R(\pi_\star(\tilde{\Theta}_t),\Theta),\tilde{\Theta}_t)}.
\end{equation*} 
for some carefully crafted variables $\tilde{\Theta}_t^\star$ and $\tilde{\Theta}_t$ which depend respectively on $\Theta_\varepsilon$ and $\hat{\Theta}_{\varepsilon,t}$ (a statistic corresponding to the parameter estimate $\hat{\Theta}_t$).
The resulting bound is of the form $\varepsilon\cdot T + \sqrt{\tilde{\Gamma}\cdot T\cdot \ent(\Theta_\varepsilon)}$ where $\tilde{\Gamma}$ is a problem dependent upper bound on the average expected $\tilde{\Gamma}_t(\tilde{\Theta}_t^\star,\tilde{\Theta}_t)$ and $\varepsilon$ is a parameter that controls the quantization of $\Theta_\varepsilon$.
For $d$-dimensional linear bandit, they show $\tilde{\Gamma} \leq d/2$ in~\cite[Proposition 3]{dong_information-theoretic_2018} and after optimizing over the choice of $\varepsilon$ they derive a near-optimal regret rate of $O(d \sqrt{T \log T})$.\\ 

However, those results do not hold for infinite or continuous action and parameter spaces, as the proof techniques for constructing the one-step compressed Thompson Sampling variables $\tilde{\Theta}_t^\star$ and $\tilde{\Theta}_t$~\cite[Proposition 2]{dong_information-theoretic_2018} specifically require a finite parameter space or finite action support. We address this limitation in~\Cref{sec:main_results}, and, to simplify the exposure of the results, we adapt their construction such that it depends on a \emph{statistic} $A^\star_\varepsilon$ of the optimal action $A^\star$ instead of a statistic $\Theta_\varepsilon$ of the parameter $\Theta$.

\section{Main Results}
\label{sec:main_results}
In this section, we explain how we extend the results of Dong and Van Roy~\cite{dong_information-theoretic_2018} to continuous and infinite parameter spaces. We then apply the regret bound to linear bandits.  To simplify the expressions, we will use the notation $R(A_t)$ as a shorthand for $R(A_t,\Theta)$.

We begin by adapting their construction of a one-step compressed Thompson Sampling such that it depends on a statistic $A^\star_\varepsilon$ of the optimal action $A^\star$ and a corresponding carefully crafted action sampling function $\phi_t: \cA \to \cA$, such that  $\tilde{A}_t^\star \coloneqq \phi_t(A^\star_\varepsilon)$, for each round $t \in \{ 1, \ldots, T \}$. Similar to~\cite[Proposition 2]{dong_information-theoretic_2018}, this statistic and sampling functions are designed to simultaneously satisfy the following three requirements:
\begin{enumerate}[label=(\roman*)]
    \item \label{requirement:less_informative} The statistic $A^\star_\varepsilon$ is less informative than $A^\star$, that is, $\ent(A^\star_\varepsilon) \leq \ent(A^\star)$.
    \item \label{requirement:approx}At each round $t\in \{1,\ldots,T\}$, the one-step compressed Thompson Sampling regret is ``$\varepsilon$-close'' to the Thompson Sampling regret.
    \item \label{requirement:no_more_information}For each time step $t\in\{1,\ldots,T\}$, the one-step compressed Thompson Sampling regret can be bounded using the information gained about the statistic $A^\star_\varepsilon$. At the same time, it reveals no more information about $A^\star_\varepsilon$ than Thompson Sampling. 
\end{enumerate}

Following~\cite{dong_information-theoretic_2018}, we construct a partition of $\{\cA_k\}_{k=1}^K$ of $\cA$ such that for each $k=1,\ldots,K$, for all $a,a'\in \cA_k$, $\bE[R(a,\pi_\star^{-1}(a))-R(a',\pi_\star^{-1}(a))] \leq \varepsilon$. We define the statistic $A^\star_\varepsilon$ as the random variable recording the partition of $A^\star$: 
\begin{align}
\label{eq:def_statistic}
    A^\star_\varepsilon = k \iff A^\star \in \cA_k.
\end{align}

To prove the existence of a one-step compressed Thompson Sampling satisfying requirements \ref{requirement:less_informative}, \ref{requirement:approx} and \ref{requirement:no_more_information}, we introduce \Cref{prop:new_lemma}.
\begin{proposition}
\label{prop:new_lemma}
    Consider a space $\cA$, two functions $f: \cA \to \bR_+$ and $g: \cA \to \bR_+$, and a probability distribution $\bQ$ on $\cA$. Then, there exists a pair $(a_1, a_2) \in \cA^2$ and a $q \in [0,1]$ such that
    \begin{equation*}
        q f(a_1) + (1-q) f(a_2) \leq \int_{a\in \cA} f(a) \mathrm{d} \bQ(a) 
    \end{equation*}
    and 
    \begin{equation*}
        q g(a_1) + (1-q) g(a_2) \leq \int_{a\in \cA} g(a) \mathrm{d} \bQ(a).
    \end{equation*}
\end{proposition}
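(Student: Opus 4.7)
The plan is to reduce the proposition to a two-dimensional geometric fact about convex hulls. Define the map $P: \cA \to \bR_+^2$ by $P(a) := (f(a), g(a))$, let $S := P(\cA)$, and write $\bar{P} := (\bar{f}, \bar{g}) = \bigl(\int f \, \mathrm{d}\bQ, \int g \, \mathrm{d}\bQ\bigr)$. The conclusion amounts to producing $p_1, p_2 \in S$ and $q \in [0,1]$ such that $q p_1 + (1-q) p_2$ lies in the lower-left quadrant $L := (-\infty, \bar{f}] \times (-\infty, \bar{g}]$, since we can then take any preimages $a_1, a_2 \in \cA$.

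First I would dispatch the trivial case: if $P(a) \in L$ for some $a \in \cA$, take $a_1 = a_2 = a$ with any $q$. Assume otherwise. Since $\bar{P}$ is an expectation it belongs to the closed convex hull of $S$; after a finite-support approximation of $\bQ$ if needed, Carathéodory's theorem in $\bR^2$ gives $b_1, b_2, b_3 \in \cA$ and non-negative weights $\lambda_1, \lambda_2, \lambda_3$ summing to one with $\bar{P} = \sum_i \lambda_i P(b_i)$. Let $T$ be the triangle spanned by these three image points.

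The key geometric observation is that intersecting $T$ with the vertical line $\{x = \bar{f}\}$ yields a (possibly degenerate) vertical segment through $\bar{P}$, whose lower endpoint $V = (\bar{f}, y_V)$ must satisfy $y_V \leq \bar{g}$. Being on the boundary of $T$, the point $V$ lies on some edge $[P(b_i), P(b_j)]$ and therefore admits a representation $V = q P(b_i) + (1-q) P(b_j)$ for some $q \in [0,1]$. Setting $a_1 := b_i$ and $a_2 := b_j$ with this $q$ yields both required inequalities simultaneously, since the $f$-coordinate of the combination equals $\bar{f}$ and the $g$-coordinate equals $y_V \leq \bar{g}$. Degenerate cases (three collinear image points, or $\bar{P}$ on a vertex or edge of $T$) are handled either by the trivial case or by the same vertical-line construction applied to a degenerate slice.

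The main obstacle I anticipate is the Carathéodory step when $\bQ$ is not finitely supported: a priori $\bar{P}$ lies only in $\overline{\mathrm{conv}(S)}$, not necessarily in $\mathrm{conv}(S)$. I would handle this by a finite-support approximation, taking measures $\bQ_n$ with image-means $\bar{P}_n \to \bar{P}$, applying the triangle argument to each $\bar{P}_n$ to obtain a triple $(a_1^n, a_2^n, q^n)$, and passing to a limit using compactness of $[0,1]$ for $q^n$ together with the compactness of $\cA$ and the continuity of $f, g$ inherited from the surrounding bandit setting.
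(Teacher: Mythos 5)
Your geometric route (pushing $\cA$ into the plane via $a \mapsto (f(a),g(a))$, locating the mean $\bar{P}$ in a Carath\'eodory triangle, and sliding down the vertical line through $\bar{P}$ to an edge) is genuinely different from the paper's argument, which never leaves $\cA$: the paper splits $\cA$ into $\cA_f=\{a: f(a)\le \bar{F}\}$ and $\cA_g=\{a: g(a)\le \bar{G}\}$, assumes no admissible pair $(a_1,a_2)\in\cA_f\times\cA_g$ exists, integrates the negated inequality over all such pairs against $\bQ\otimes\bQ$, and reaches the contradiction $F^-G^->F^+G^+$. For finitely supported $\bQ$ your argument is complete and arguably more transparent about why two points and a single mixing weight $q$ suffice (Carath\'eodory in $\bR^2$ plus one slicing step).

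The genuine gap is the passage to general $\bQ$. The proposition is stated, and must be applied, without any topological hypotheses: $\cA$ is only a measurable space here, $f$ and $g$ are arbitrary nonnegative measurable functions, and in the intended application they are conditional expected-regret and information-gain quantities under the posterior, which have no continuity in the action, while the sets on which the lemma is invoked (e.g.\ cells of the partition $\{\cA_k\}_{k=1}^K$) need not be compact even when $\cA$ is. So the ``compactness of $\cA$ and continuity of $f,g$ inherited from the surrounding bandit setting'' that your limiting step relies on ($q^n\to q$, $a_i^n\to a_i$, $f(a_i^n)\to f(a_i)$) is not available, and the argument does not close as stated. Relatedly, for non-finitely-supported $\bQ$ the mean $\bar{P}$ is a priori only in the \emph{closed} convex hull of the image, and the support of the pushforward measure may contain points that are not of the form $(f(a),g(a))$ for any $a\in\cA$, so Carath\'eodory cannot be invoked directly either. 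The gap is fixable without topology, e.g.\ by a supporting-hyperplane argument with induction on the dimension showing that the barycenter of the pushforward lies in the convex hull of a full-measure subset of the image, but that is a nontrivial missing step; the paper's averaging/contradiction argument needs nothing beyond integrability and therefore covers exactly the generality the application requires.
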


\begin{proof}
    Let $\bar{F} = \int_{a \in \cA}  f(a) \mathrm{d} \bQ(a)$ and $\bar{G} = \int_{a \in \cA} g(a) \mathrm{d} \bQ(a)$. Now, consider the spaces $\cA_f \coloneqq \{ a \in \cA: f(a) \leq \bar{F} \}$ and $\cA_g \coloneqq \{ a \in \cA : g(a) \leq \bar{G} \}$. If $\cA_f \cap \cA_g \neq \emptyset$, then taking both $a_1$ and $a_2$ from $\cA_f \cap \cA_g$  trivially satisfies the conditions for all $q \in [0,1]$. Therefore, let us assume that the sets are disjoint for the rest of the proof.

    Consider some $a_1 \in \cA_f = \cA_g^c$ and some $a_2 \in \cA_g = \cA_f^c$. We can rewrite the required condition from the lemma as 
    \begin{equation*}
        q \geq \frac{f(a_2) - \bar{F}}{f(a_2) - f(a_1)} \quad \textnormal{ and } \quad q \leq \frac{\bar{G} - g(a_2)}{g(a_1) - g(a_2)},
    \end{equation*}
    where the first inequality took into account that $f(a_1) < f(a_2)$ by the definition of the sets $\cA_f$ and $\cA_g = \cA_f^c$. This inequality can, in turn, be written as
    \begin{equation*}
        \frac{f(a_2) - \bar{F}}{f(a_2) - f(a_1)} \leq \frac{\bar{G} - g(a_2)}{g(a_1) - g(a_2)} 
    \end{equation*}
    which is equivalent to
    \begin{equation*}
        f(a_2) g(a_1) \smash{-}\bar{F} \big( g(a_1)\smash{-}g(a_2) \big)\smash{ \leq }\bar{G} \big( f(a_2)\smash{-} f(a_1) \big) \smash{+}f(a_1) g(a_2).
    \end{equation*}

    At this point, we have all the ingredients to prove the statement by contradiction. Assume that there is no pair $(a_1, a_2) \in \cA_f \times \cA_g$ such that the condition holds, then 
    \begin{equation*}
    \label{eq:contradiction_equation}
        f(a_2) g(a_1)\smash{-} \bar{F} \big( g(a_1)\smash{-} g(a_2) \big)\smash{>} \bar{G} \big( f(a_2)\smash{-} f(a_1) \big)\smash{+} f(a_1) g(a_2)
    \end{equation*}
    for every pair $(a_1, a_2) \in \cA_f \times \cA_g$. Therefore, we can integrate over all such pairs, and the inequality should still hold, namely
    \begin{align*}
        &\int_{\cA_f} \int_{\cA_g} \bigg[ f(a_2) g(a_1)\smash{-} \bar{F} \big( g(a_1)\smash{-} g(a_2) \big) \bigg] \mathrm{d} \bQ(a_1) \mathrm{d} \bQ(a_2) \nonumber \\ 
        &\smash{>} \int_{\cA_f} \int_{\cA_g} \bigg[ \bar{G} \big( f(a_2)\smash{-} f(a_1) \big)\smash{+}f(a_1) g(a_2) \bigg] \mathrm{d} \bQ(a_1) \mathrm{d} \bQ(a_2).
    \end{align*}
    We must introduce some notation to show that the above inequality cannot happen. Let $F^- \coloneqq \int_{\cA_f} f(a) \mathrm{d} \bQ(a)$ and $F^+ \coloneqq \int_{\cA_g} f(a) \mathrm{d} \bQ(a)$ and note that $F^+ + F^- = \bar{F}$. Similarly, $G^- \coloneqq \int_{\cA_g} g(a) \mathrm{d} \bQ(a)$ and $G^+ \coloneqq \int_{\cA_f} g(a) \mathrm{d} \bQ(a)$ and $G^+ +G^- = \bar{G}$. Using this notation, we can use Fubini's theorem and rewrite the above inequality as
    \begin{equation*}
        F^+G^+\smash{-}(F^+\smash{+}F^-)(G^+\smash{-}G^-)\smash{>}(G^+\smash{+} G^-) (F^+\smash{-}F^-)\smash{+}F^- G^-
    \end{equation*}
    which can be simplified to $F^- G^- > F^+ G^+$
    and which is impossible by the definition of $F^-$, $F^+$, $G^+$ and $G^-$, completing the contradiction and therefore the proof. 
\end{proof}

Equipped with \Cref{prop:new_lemma}, we can now extend~\cite[Proposition 2]{dong_information-theoretic_2018} to infinite and continuous action space. 

\begin{restatable}{proposition}{ExistenceApproximateLearning}
\label{prop:existence_learning}
Let $A^\star_\varepsilon$ be defined as in~\eqref{eq:def_statistic}. For each time step $t \in \{1,\ldots,T\}$, there exists a of random functions $\phi_t$ that satisfies the following:
\begin{enumerate}
\item 
\label{prop:existence_regret_bound}
$\bE_t[R(A^\star)- R(A_t)] \leq \varepsilon +\bE_t[R(\tilde{A}^\star_t)-R(\tilde{A}_{t,\varepsilon})].$
\item 
\label{prop:existence_mutual_info_bound}$\mi_t\left(A^\star_\varepsilon;\tilde{A}_{t,\varepsilon},R(\tilde{A}_{t,\varepsilon})\right) \leq \mi_t\left(A^\star_\varepsilon;\hat{A}_t,R(\hat{A}_t)\right). $
\end{enumerate}
where we set $\tilde{A}^\star_t = \phi_t(A^\star_\varepsilon)$ and $\tilde{A}_{t,\varepsilon} = \phi_t(\hat{A}_{t,\varepsilon})$,  and where $\hat{A}_{t,\varepsilon}$ is the statistic corresponding to $\hat{A}_t$, thus $A^\star_\varepsilon$ and $\hat{A}_{t,\varepsilon}$ are identically distributed conditioned on $H^t$.
\end{restatable}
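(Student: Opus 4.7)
The plan is to construct $\phi_t$ bucket-by-bucket using \Cref{prop:new_lemma}, with the key observation that both requirements reduce to a single pair of per-bucket inequalities against a common reference measure, namely $\bQ_k \coloneqq \bP_t(A^\star \in \cdot \mid A^\star_\varepsilon = k)$ supported on the cell $\cA_k$.

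For requirement~(1), I would rewrite
\begin{equation*}
\bE_t[R(A^\star) - R(A_t)] - \bE_t[R(\tilde{A}^\star_t) - R(\tilde{A}_{t,\varepsilon})] = \bE_t[R(A^\star) - R(\tilde{A}^\star_t)] + \bE_t[R(\tilde{A}_{t,\varepsilon}) - R(A_t)].
\end{equation*}
The first summand is at most $\varepsilon$ for \emph{any} distribution $\mu_{t,k}$ supported on $\cA_k$: conditioning on $A^\star = a \in \cA_k$ fixes $\Theta = \pi_\star^{-1}(a)$, and the partition property bounds each term $\bE[R(a,\pi_\star^{-1}(a)) - R(a',\pi_\star^{-1}(a))]$ by $\varepsilon$ for $a' \in \cA_k$. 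For the second summand, using that $\hat{A}_t$ and $\phi_t$ are independent of $\Theta$ given $H^t$ and that $\hat{A}_{t,\varepsilon}$ and $A^\star_\varepsilon$ are identically distributed given $H^t$, it rewrites as a weighted sum over $k$ of the differences $\int_{\cA_k} f_k\,d\mu_{t,k} - \int_{\cA_k} f_k\,d\bQ_k$ with $f_k(a) \coloneqq \bE_t[R(a,\Theta)]$. Hence the per-bucket condition $\int f_k\,d\mu_{t,k} \leq \int f_k\,d\bQ_k$ suffices.

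For requirement~(2), the conditional independencies $A^\star_\varepsilon \indep \hat{A}_t \mid H^t$ and $A^\star_\varepsilon \indep \tilde{A}_{t,\varepsilon} \mid H^t$ collapse both mutual informations, via the chain rule, to their reward-given-action components $\mi_t(A^\star_\varepsilon; R(\hat{A}_t) \mid \hat{A}_t)$ and $\mi_t(A^\star_\varepsilon; R(\tilde{A}_{t,\varepsilon}) \mid \tilde{A}_{t,\varepsilon})$. Since $\hat{A}_t$ and $\tilde{A}_{t,\varepsilon}$ are independent of $\Theta$ given $H^t$, conditioning on either leaves the posterior of $\Theta$ unchanged; both components thus unfold into weighted integrals of $I_t(a) \coloneqq \mi_t(A^\star_\varepsilon; R(a,\Theta))$, and it again suffices that $\int I_t\,d\mu_{t,k} \leq \int I_t\,d\bQ_k$ for every $k$. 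Applying \Cref{prop:new_lemma} on $\cA_k$ with $\bQ = \bQ_k$, $g = I_t$, and $f = f_k$ (shifted by an additive constant so as to be non-negative, which cancels in the inequality) yields a pair $(a_{1,k}, a_{2,k}) \in \cA_k^2$ and $q_k \in [0,1]$ such that the two-point measure $\mu_{t,k}$ placing mass $q_k$ at $a_{1,k}$ and $1-q_k$ at $a_{2,k}$ satisfies both per-bucket inequalities simultaneously. Setting $\phi_t(k) \sim \mu_{t,k}$ for every $k$, with the same realization of $\phi_t$ used in both evaluations $\tilde{A}^\star_t = \phi_t(A^\star_\varepsilon)$ and $\tilde{A}_{t,\varepsilon} = \phi_t(\hat{A}_{t,\varepsilon})$, produces the desired random function.

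The main obstacle is the clean algebraic reduction of both requirements to the \emph{same} pair of per-bucket inequalities against $\bQ_k$. It relies on carefully tracking the conditional-independence structure, and in particular on noting that additionally conditioning on $\hat{A}_t$ or on $\tilde{A}_{t,\varepsilon}$ does not perturb the posterior of $\Theta$ given $H^t$; this invariance is what makes the second summand of requirement~(1) and the mutual-information term of requirement~(2) linear in $\mu_{t,k}$ and expressible against the same reference $\bQ_k$, so that \Cref{prop:new_lemma} can be applied.
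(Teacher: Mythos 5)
Your construction is correct and is essentially the paper's own route: the paper's proof simply defers to the argument of Dong and Van Roy's Proposition~2 with their finite-set Lemma~1 replaced by \Cref{prop:new_lemma}, which is exactly what you carry out — applying \Cref{prop:new_lemma} cell-by-cell with $\bQ_k$ the posterior of $A^\star$ within the cell, $f$ the posterior-mean reward, and $g$ the per-action information, then defining $\phi_t(k)$ as the resulting two-point mixture. Your write-up just makes explicit the reduction (via the conditional-independence structure) that the paper leaves to the citation, and it is sound.
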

\begin{proof}
The proof follows the technique from~\cite[Proof of Proposition 2]{dong_information-theoretic_2018} closely with the key difference that we use~\Cref{prop:new_lemma} instead of~\cite[Lemma 1]{dong_information-theoretic_2018}.
\end{proof}

Adjusting to our construction of \emph{one-step compressed TS} based on $A^\star_\varepsilon$, we adapt the definition of information ratio as
\begin{align*}
     \tilde{\Gamma}_{t}(\tilde{A}^\star_t,\tilde{A}_{t,\varepsilon}) \coloneqq \frac{\bE_t[ R(\tilde{A}^\star_t,\Theta) - R(\tilde{A}_{t,\varepsilon},\Theta)]^2}{\mi_t\left(A^\star_\varepsilon;\tilde{A}_{t,\varepsilon},R(\tilde{A}_{t,\varepsilon})\right)}.
\end{align*}
We can now state and prove our main theorem. 
\begin{theorem}
\label{thm:main_theorem}
    Let $A^\star_\varepsilon$ be defined as in~\eqref{eq:def_statistic} and let $\tilde{A}^\star_t$ and $\tilde{A}_{t,\varepsilon}$ satisfy~\Cref{prop:new_lemma}. Assume that the average expected one-step compressed TS information ratio is bounded, $\frac{1}{T} \sum_{t=1}^T \bE[\tilde{\Gamma}_{t}(\tilde{A}^\star_t,\tilde{A}_{t,\varepsilon})] \leq \tilde{\Gamma}$, for some $\tilde{\Gamma} > 0$. Then, the TS cumulative regret is bounded as
    \begin{equation*}
        \bE[\textnormal{Regret}(T)] \leq  \sqrt{\tilde{\Gamma} T \ent(A^\star_\varepsilon) } +\varepsilon T.
    \end{equation*}
\end{theorem}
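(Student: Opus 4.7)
The plan is to follow the standard information-ratio recipe of Russo--Van Roy, using \Cref{prop:existence_learning} as the bridge that lets me work with the compressed statistic $A^\star_\varepsilon$ instead of $A^\star$.

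First, I will decompose the cumulative regret as a tower of per-round conditional expectations and apply \Cref{prop:existence_learning}\ref{prop:existence_regret_bound} inside each term. This contributes $\varepsilon T$ up front and reduces the task to bounding $\sum_t \bE\bigl[\bE_t[R(\tilde{A}^\star_t)-R(\tilde{A}_{t,\varepsilon})]\bigr]$. Rearranging the definition of $\tilde{\Gamma}_t$ (and taking square roots, using that the left-hand side is bounded by its absolute value) yields the per-round estimate
\[
\bE_t[R(\tilde{A}^\star_t)-R(\tilde{A}_{t,\varepsilon})] \leq \sqrt{\tilde{\Gamma}_t(\tilde{A}^\star_t,\tilde{A}_{t,\varepsilon})\cdot \mi_t\bigl(A^\star_\varepsilon;\tilde{A}_{t,\varepsilon},R(\tilde{A}_{t,\varepsilon})\bigr)}.
\]
I then sum over $t$ and apply Cauchy--Schwarz twice: once to split $\sqrt{\tilde{\Gamma}_t}$ from $\sqrt{\mi_t}$ inside the expectation, and once across rounds. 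Combined with the hypothesis $\tfrac{1}{T}\sum_t \bE[\tilde{\Gamma}_t]\leq \tilde{\Gamma}$, this factors the bound into $\sqrt{T\tilde{\Gamma}}\cdot \sqrt{\sum_t \bE\bigl[\mi_t(A^\star_\varepsilon;\tilde{A}_{t,\varepsilon},R(\tilde{A}_{t,\varepsilon}))\bigr]}$.

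It remains to bound the aggregate information term by $\ent(A^\star_\varepsilon)$. \Cref{prop:existence_learning}\ref{prop:existence_mutual_info_bound} replaces every compressed information term by its TS counterpart $\bE\bigl[\mi_t(A^\star_\varepsilon;\hat{A}_t,R(\hat{A}_t))\bigr]$, which, after the outer expectation, equals the conditional mutual information $\mi(A^\star_\varepsilon;\hat{A}_t,R(\hat{A}_t)\mid H^t)$. Since the TS update is exactly $H^{t+1}=H^t\cup\{\hat{A}_t,R(\hat{A}_t)\}$, these pairs are the successive increments appearing in the chain rule, so the sum telescopes to $\mi(A^\star_\varepsilon;H^{T+1})\leq \ent(A^\star_\varepsilon)$. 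Plugging this into the previous display produces $\sqrt{\tilde{\Gamma}T\,\ent(A^\star_\varepsilon)}+\varepsilon T$.

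No single step is deep. The part that needs the most care is the bookkeeping around the disintegrated conditional mutual information $\mi_t$: I have to take the expectation and apply Cauchy--Schwarz in exactly the right order so that $\tilde{\Gamma}$ can be pulled out cleanly, and I need to match $\bE[\mi_t(\cdot;\cdot)]$ with $\mi(\cdot;\cdot\mid H^t)$ so that the chain rule telescopes into the entropy of the statistic.
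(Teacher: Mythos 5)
Your proposal is correct and follows essentially the same route as the paper: bound TS regret by the compressed regret plus $\varepsilon T$ via \Cref{prop:existence_learning}, express each per-round term through $\tilde{\Gamma}_t$, apply Jensen/Cauchy--Schwarz to pull out $\sqrt{\tilde{\Gamma}T}$, then use the second part of \Cref{prop:existence_learning} and the chain rule to telescope the information terms into $\mi(A^\star_\varepsilon;H^{T})\leq \ent(A^\star_\varepsilon)$. The only differences are cosmetic (your explicit absolute-value remark and the $H^{T+1}$ versus $H^T$ indexing), so no changes are needed.
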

\begin{proof}
The proof follows the techniques of~\cite[Proof of Theorem 1]{dong_information-theoretic_2018}. We start the proof by upper bounding the regret of TS by the regret of the one-step compressed TS using the first inequality in \Cref{prop:existence_learning}: 
\begin{align}
    \bE[\textnormal{Regret}(T)] \leq \varepsilon T + \sum_{t=1}^T \bE[R(\tilde{A}^\star_t)-R(\tilde{A}_{t,\varepsilon})].
    \label{eq:regret_approx}
\end{align}
We rewrite the expected regret of one-step compressed TS using the definition of the information ratio: 
\begin{align*}
    \bE[R(\tilde{A}^\star_t)\smash{-}R(&\tilde{A}_{t,\varepsilon})]\smash{=}  \bE\left[\sqrt{\tilde{\Gamma}_{t}(\tilde{A}^\star_t,\tilde{A}_{t,\varepsilon}) \mi_t(A^\star_\varepsilon;\tilde{A}_{t,\varepsilon},R(\tilde{A}_{t,\varepsilon}))}\right]
\end{align*}
We continue using Jensen's inequality and applying Cauchy-Schwartz inequality:
\begin{align*}
    \eqref{eq:regret_approx}
    &\leq \varepsilon T +\sqrt{\tilde{\Gamma} T \sum_{t=1}^T \mi(A^\star_\varepsilon;\tilde{A}_{t,\varepsilon},R(\tilde{A}_{t,\varepsilon})|H^t)}
\end{align*}
where in the last inequality, we used the assumption that $\sum_{t=1}^T \bE_t[\tilde{\Gamma}_{t}(\tilde{A}^\star_t,\tilde{A}_{t,\varepsilon})] \leq \tilde{\Gamma} T$. Using the second inequality in~\Cref{prop:existence_learning}, summing the $T$ mutual information $\mi(\Theta_\varepsilon; R_t |H^t,  A_t)$ and applying the chain rule, we obtain
\begin{align*}
    \bE[\textnormal{Regret}(T)] \leq \varepsilon T + \sqrt{\tilde{\Gamma} T \mi(A^\star_\varepsilon;H^T)}.
\end{align*}
Finally, we upper bound the mutual information $\mi(A^\star_\varepsilon;H^T)$ by the entropy $\ent(A^\star_\varepsilon)$ to obtain the claimed result. 
\end{proof}
For bandit problems with expected rewards that are $L$-Lipschitz with respect to a metric action space $(\cA,\rho)$, we can derive a regret bound from~\Cref{thm:main_theorem} that depends on a measure of the complexity of the action space. 
\begin{corollary}
\label{corrollary:main}
    Assume that $\bE[R(a,\theta)]$ is $L$-Lipschitz with respect to the action space $(\cA,\rho)$ for all $\theta \in \cO$, and assume that the average expected one-step compressed TS information ratio is bounded, $\frac{1}{T} \sum_{t=1}^T \bE[\tilde{\Gamma}_{t}(\tilde{A}^\star_t,\tilde{A}_{t,\varepsilon})] \leq \tilde{\Gamma}$, for some $\tilde{\Gamma} > 0$. Then, the TS cumulative regret is bounded as
    \begin{equation*}
        \bE[\textnormal{Regret}(T)] \leq  \sqrt{\tilde{\Gamma} T \log(\cN(\cA,\rho,\epsilon)) } +L\epsilon T,
    \end{equation*}
    where $\cN(\cA,\rho,\epsilon)$ is the covering number for the metric space $(\cA,\rho)$ at scale $\epsilon\geq0$.
\end{corollary}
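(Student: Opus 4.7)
The plan is to instantiate \Cref{thm:main_theorem} with a partition of $\cA$ derived from a minimal $\epsilon$-covering of the metric space $(\cA,\rho)$. First, I would take a covering of $\cA$ by $\cN(\cA,\rho,\epsilon)$ closed balls of radius $\epsilon$ and transform it into a partition $\{\cA_k\}_{k=1}^K$ with $K \leq \cN(\cA,\rho,\epsilon)$ by assigning each action to the first ball (under a fixed enumeration) that contains it. By construction, any two actions lying in the same part $\cA_k$ satisfy $\rho(a,a') \leq 2\epsilon$ (up to a factor that can be absorbed by instead starting from a covering at the half scale).

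The next step is to verify that this partition fulfils the separation condition underlying the construction of $A^\star_\varepsilon$ in~\eqref{eq:def_statistic}. Using the $L$-Lipschitz assumption on $a \mapsto \bE[R(a,\theta)]$, uniformly in $\theta$, I get that for all $a,a' \in \cA_k$,
\begin{equation*}
\bE\big[R(a,\pi_\star^{-1}(a))-R(a',\pi_\star^{-1}(a))\big] \leq L\,\rho(a,a') \leq L\epsilon,
\end{equation*}
so the requirement of \Cref{thm:main_theorem} holds with the role of $\varepsilon$ there played by $L\epsilon$.

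With the partition in hand, I would bound the prior entropy of the compressed optimal action. Since $A^\star_\varepsilon$ is supported on $\{1,\dots,K\}$, the uniform bound on Shannon entropy yields $\ent(A^\star_\varepsilon)\leq \log K \leq \log \cN(\cA,\rho,\epsilon)$. Plugging this entropy bound, together with $\varepsilon = L\epsilon$, directly into \Cref{thm:main_theorem} gives the claimed inequality.

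I do not expect a serious obstacle here; the one thing to watch is the cosmetic factor of two when passing from an $\epsilon$-covering to a partition of diameter at most $\epsilon$, which is resolved simply by starting from a covering at scale $\epsilon/2$ (or absorbing the factor into the definition of $L$). Everything else is a mechanical substitution into \Cref{thm:main_theorem}, relying on the fact that \Cref{prop:existence_learning} and the information-ratio assumption transfer verbatim to this new choice of partition.
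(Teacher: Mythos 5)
Your proof takes essentially the same route as the paper: quantize $A^\star$ on a minimal $\epsilon$-covering (i.e., an $\varepsilon/L$-net with $\varepsilon = L\epsilon$), verify the partition condition~\eqref{eq:def_statistic} via the Lipschitz assumption, bound $\ent(A^\star_\varepsilon)$ by the logarithm of the net's cardinality, and substitute into \Cref{thm:main_theorem}. The factor-of-two diameter issue you flag (cells of an $\epsilon$-covering have diameter up to $2\epsilon$) is equally present, and passed over silently, in the paper's one-line proof, so your explicit handling of it is if anything more careful than the original.
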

\begin{proof}
    Under the Lipschitz continuity property of the expected rewards, we note that setting $A^\star_\varepsilon$ as the quantization of $A^\star$ on a $\varepsilon/L$-net for $(\cA,\rho)$ satisfies~\eqref{eq:def_statistic}.
    Starting from~\Cref{thm:main_theorem}, we upper bound the entropy $\ent(A^\star_\varepsilon)$ by the logarithm of the cardinality of $A^\star_\varepsilon$. Choosing the $\varepsilon/L$-net with the smallest cardinality and setting $\epsilon=\varepsilon/L$ yields the claimed result.
\end{proof}

\section{Application to linear bandit problems}

In \emph{linear bandits} problems, both the actions and the environment parameter are parameterized by a feature vector, and the associated expected reward can be written as their inner product. Mathematically, a $d$-dimensional linear bandit problem is a bandit problem with $\cA,\cO \subset \bR^d$ and such that for all $a\in \cA$ and all $\theta \in \cO$ we have \begin{align*}
    \bE[R(a,\theta)] = \innerproduct{a}{\theta},
\end{align*}

where the expectation is taken over the reward distribution.\\

Adapting~\Cref{corrollary:main} to linear bandits, we get the following.
\begin{proposition}
\label{prop:linear_bandits}
    For $d$-linear bandit problems with rewards bounded in $[-1,1]$, the TS cumulative regret is bounded as
    \begin{align*}
        \bE[\textnormal{Regret}(T)] \leq \sqrt{2 d T \log(\cN(\cA,||\cdot||_2,\varepsilon)) } +\varepsilon T
    \end{align*}
\end{proposition}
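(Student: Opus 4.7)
The plan is to apply \Cref{corrollary:main} with metric $\rho=\|\cdot\|_2$ and Lipschitz constant $L=1$, after which the work reduces to (i) verifying the $1$-Lipschitz property of the expected reward map and (ii) bounding the average expected one-step compressed information ratio by $\tilde{\Gamma}\leq 2d$.

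First I would handle (i). Since $\bE[R(a,\theta)]=\langle a,\theta\rangle$, Cauchy--Schwarz gives $|\langle a,\theta\rangle-\langle a',\theta\rangle|\leq \|\theta\|_2\cdot\|a-a'\|_2$, so it suffices to have $\|\theta\|_2\leq 1$ uniformly over $\theta\in\cO$. This is the standard normalization compatible with the $[-1,1]$ reward bound, and under it the map $a\mapsto \bE[R(a,\theta)]$ is $1$-Lipschitz in the Euclidean metric uniformly in $\theta$.

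The main step is (ii). I would adapt the argument of \cite[Proposition 3]{dong_information-theoretic_2018} to the $A^\star_\varepsilon$-based construction of \Cref{prop:existence_learning}. Using the tower rule together with the fact that, conditionally on $H^t$, $A^\star_\varepsilon$ and $\hat{A}_{t,\varepsilon}$ are identically distributed and $\tilde{A}_t^\star=\phi_t(A^\star_\varepsilon)$, $\tilde{A}_{t,\varepsilon}=\phi_t(\hat{A}_{t,\varepsilon})$, one rewrites the numerator of $\tilde{\Gamma}_t$ as
\begin{equation*}
\bE_t[R(\tilde{A}_t^\star)-R(\tilde{A}_{t,\varepsilon})]=\sum_{k=1}^{K} p_t(k)\,\bigl\langle \phi_t(k),\ \mu_{t,k}-\bar{\mu}_t\bigr\rangle,
\end{equation*}
where $p_t(k)=\bP_t(A^\star_\varepsilon=k)$, $\mu_{t,k}=\bE_t[\Theta\mid A^\star_\varepsilon=k]$, and $\bar{\mu}_t=\bE_t[\Theta]$. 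A Cauchy--Schwarz step then splits the squared regret into an ``action'' factor $\sum_k p_t(k)\|\phi_t(k)\|_2^2\leq 1$ and a ``parameter'' factor $\sum_k p_t(k)\|\mu_{t,k}-\bar{\mu}_t\|_2^2$. Expanding the latter in any orthonormal basis of $\bR^d$ and applying Pinsker's inequality coordinatewise bounds it by $d$ times the KL divergence term appearing in $\mi_t(A^\star_\varepsilon;\tilde{A}_{t,\varepsilon},R(\tilde{A}_{t,\varepsilon}))$; the reward range of $2$ contributes an extra factor of $4$, giving the pointwise bound $\tilde{\Gamma}_t(\tilde{A}_t^\star,\tilde{A}_{t,\varepsilon})\leq 2d$, and hence $\tilde{\Gamma}\leq 2d$ in average.

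The main obstacle will be the bookkeeping needed to pass from the parameter-space statistic $\Theta_\varepsilon$ used in \cite{dong_information-theoretic_2018} to the action-space statistic $A^\star_\varepsilon$ employed here: the roles of the ``information'' and ``regret'' sides of the inner product swap, so I have to be careful that $\phi_t(k)$ is measurable with respect to $(H^t,A^\star_\varepsilon)$ in order to pull conditional expectations inside $\langle\cdot,\cdot\rangle$, and that the symmetrization trick tying $\tilde{A}_{t,\varepsilon}$ to $A^\star_\varepsilon$ survives the continuous-support setting provided by \Cref{prop:new_lemma}. Once $\tilde{\Gamma}\leq 2d$ is established, plugging into \Cref{corrollary:main} with $L=1$ and $\rho=\|\cdot\|_2$ yields the claimed bound directly.
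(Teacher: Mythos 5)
Your overall route is the same as the paper's: the paper proves this proposition exactly by (a) quoting the information-ratio bound for linear bandits from \cite[Proposition 3]{dong_information-theoretic_2018} (which becomes $\tilde{\Gamma}\leq 2d$ once rewards are rescaled to $[-1,1]$), (b) noting that the expected reward is $1$-Lipschitz in $(\cA,\|\cdot\|_2)$, and (c) plugging into \Cref{corrollary:main} with $L=1$. Your Lipschitz step and final assembly match this (and you are at least as explicit as the paper about the implicit normalization $\|\theta\|_2\leq 1$). The problem is your step (ii), where instead of citing the result you sketch a derivation of $\tilde{\Gamma}_t\leq 2d$ that does not go through.

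Concretely, after writing the one-step regret as $\sum_k p_t(k)\langle\phi_t(k),\mu_{t,k}-\bar{\mu}_t\rangle$, your Cauchy--Schwarz split produces the intermediate quantity $\sum_k p_t(k)\|\mu_{t,k}-\bar{\mu}_t\|_2^2$, and you claim a ``coordinatewise Pinsker'' bound of this by $d$ times the KL terms in $\mi_t(A^\star_\varepsilon;\tilde{A}_{t,\varepsilon},R(\tilde{A}_{t,\varepsilon}))$. That step fails: the mutual information only contains KL divergences between reward distributions of the actions actually played, so Pinsker only controls the projections $\langle\phi_t(j),\mu_{t,k}-\bar{\mu}_t\rangle$, not the coordinates of $\mu_{t,k}-\bar{\mu}_t$ in an arbitrary orthonormal basis; any component of $\mu_{t,k}-\bar{\mu}_t$ orthogonal to the span of $\{\phi_t(j)\}_j$ inflates your intermediate quantity while contributing nothing to $\mi_t$, so the inequality $\sum_k p_t(k)\|\mu_{t,k}-\bar{\mu}_t\|_2^2\lesssim d\,\mi_t$ is false in general. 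The dimension factor in \cite[Proposition 3]{dong_information-theoretic_2018} (following \cite{russo_information-theoretic_2015}) instead comes from the trace--rank--Frobenius inequality applied to the matrix $M_{j,k}=\sqrt{p_t(j)p_t(k)}\,\langle\phi_t(j),\mu_{t,k}-\bar{\mu}_t\rangle$: the regret equals $\mathrm{trace}(M)\leq\sqrt{\mathrm{rank}(M)}\,\|M\|_F\leq\sqrt{d}\,\|M\|_F$, while Pinsker with rewards of range $2$ gives $\mi_t\geq\tfrac{1}{2}\|M\|_F^2$, hence $\tilde{\Gamma}_t\leq 2d$; note this argument needs no bound on $\|\phi_t(k)\|_2$, so the normalization $\cA\subseteq\mathbf{B}_d(0,1)$ you invoke (which is not assumed in this proposition, only in the subsequent corollary) is an artifact of the lossy decomposition. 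With the rank argument (or simply the citation, adapted to the $A^\star_\varepsilon$-based construction, as the paper does), the rest of your proof coincides with the paper's.
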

\begin{proof}
   From~\cite[Proposition 3]{dong_information-theoretic_2018}, we have that the average expected one-step compressed TS information ratio is bounded by $2d$. Then, noting that linear bandits are $1$-Lipschitz with respect to $(\cA,||\cdot||_2)$ where $||\cdot||_2$ is the Euclidean distance, and applying~\Cref{corrollary:main} gives the claimed result. 
\end{proof}
Provided that the diameter of the action space $\cA$ is bounded, we can control the covering number $\cN(\cA,||\cdot||_2,\varepsilon)$ and get a regret bound depending only on the dimension $d$ and the time horizon $T$. It extends~\cite[Theorem 2]{dong_information-theoretic_2018} to general action spaces and improves on the constant factors inside the logarithm\footnote{In~\cite[Theorem 2]{dong_information-theoretic_2018}, the expected rewards are $\nicefrac{1}{2}$-Lipschitz and the rewards in $[\nicefrac{\smash{-}1}{2},\nicefrac{1}{2}]$. With the same assumptions, our bound reduces by a factor $2$.  }.  
\begin{corollary}
    For $d$-linear bandit problems with rewards bounded in $[-1,1]$, suppose that $\cA \subseteq \mathbf{B}_d(0,1)$, where $\mathbf{B}_d(0,1)$ is the $d$-dimensional closed Euclidean unit ball. 
    Then, the TS cumulative regret is bounded as
    \begin{equation*}
        \bE[\textnormal{Regret}(T)] \leq  2d\sqrt{T \log\left(\sqrt{2}+ \frac{4\sqrt{T}}{d}\right)}.
    \end{equation*}
\end{corollary}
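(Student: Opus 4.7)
The plan is to start from Proposition~\ref{prop:linear_bandits}, insert a standard bound on the covering number of the Euclidean unit ball, and then optimize the free parameter $\varepsilon$ with an appropriate Cauchy--Schwarz-type inequality to collapse the two additive terms into a single expression of the form $2d\sqrt{T\log(\cdot)}$.

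First, since $\cA \subseteq \mathbf{B}_d(0,1)$, the covering number satisfies the standard volume-argument bound $\cN(\cA, \|\cdot\|_2, \varepsilon) \leq \cN(\mathbf{B}_d(0,1), \|\cdot\|_2, \varepsilon) \leq (1 + 2/\varepsilon)^d$. Plugging this into Proposition~\ref{prop:linear_bandits} gives, for every $\varepsilon > 0$,
\begin{equation*}
    \bE[\textnormal{Regret}(T)] \leq d\sqrt{2T\log(1 + 2/\varepsilon)} + \varepsilon T.
\end{equation*}

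Next, I would pick $\varepsilon = d/\sqrt{2T}$, which equalizes the dimension-dependence in the two terms and yields $\varepsilon T = d\sqrt{T/2}$ together with $2/\varepsilon = 2\sqrt{2T}/d$. The resulting bound reads $d\sqrt{T/2} + d\sqrt{2T\log(1 + 2\sqrt{2T}/d)}$. To fuse the two square roots, I would apply $(a+b)^2 \leq 2(a^2 + b^2)$, giving
\begin{equation*}
    \bigl(d\sqrt{T/2} + d\sqrt{2T\log(1+2\sqrt{2T}/d)}\bigr)^2 \leq 4d^2 T\bigl[\tfrac{1}{4} + \log(1+2\sqrt{2T}/d)\bigr] = 4d^2 T\log\!\bigl(e^{1/4}(1+2\sqrt{2T}/d)\bigr).
\end{equation*}

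Finally, using $e^{1/4} \leq \sqrt{2}$ and expanding $\sqrt{2}(1 + 2\sqrt{2T}/d) = \sqrt{2} + 4\sqrt{T}/d$ gives the claimed inequality after taking square roots. The only mildly delicate step is the final consolidation: one must pick $\varepsilon$ so that the residual constant $1/4$ produced by $(a+b)^2 \leq 2(a^2+b^2)$ can be absorbed into the logarithm via $e^{1/4} \leq \sqrt{2}$, which is precisely why the choice $\varepsilon = d/\sqrt{2T}$ (rather than, say, $d/\sqrt{T}$) is convenient. Everything else is a straightforward application of Proposition~\ref{prop:linear_bandits} and the volumetric covering bound.
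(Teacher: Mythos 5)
Your proof is correct and follows the same overall strategy as the paper: invoke \Cref{prop:linear_bandits}, bound $\log \cN(\cA, \|\cdot\|_2, \varepsilon) \le d\log(1+2/\varepsilon)$ via the standard volumetric/packing argument (which is valid for any subset of $\mathbf{B}_d(0,1)$ with centers in $\cA$, exactly as the paper uses it), and then tune $\varepsilon$ and consolidate the two terms. The substantive difference is in the tuning and the final algebra: the paper sets $\varepsilon = \frac{d}{2\sqrt{T}}$ and only gestures at ``properties of square roots and of logarithms'', whereas you set $\varepsilon = \frac{d}{\sqrt{2T}}$ and make the consolidation fully explicit via $(a+b)^2 \le 2(a^2+b^2)$ and $e^{1/4} \le \sqrt{2}$; this choice is what makes the constant come out exactly, since $\sqrt{2}\,\bigl(1 + 2\sqrt{2T}/d\bigr) = \sqrt{2} + 4\sqrt{T}/d$. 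Your route is arguably cleaner than the paper's hint: with the paper's $\varepsilon$, the same crude step produces $\log\bigl(e^{1/8}(1+4\sqrt{T}/d)\bigr)$, and $e^{1/8}(1+4\sqrt{T}/d)$ exceeds $\sqrt{2}+4\sqrt{T}/d$ once $4\sqrt{T}/d$ is larger than about $2.1$, so reaching the stated constant from $\varepsilon = \frac{d}{2\sqrt{T}}$ requires a slightly sharper (unwritten) inequality, while your choice avoids this entirely. Since \Cref{prop:linear_bandits} holds for every $\varepsilon > 0$, choosing a different $\varepsilon$ than the paper is perfectly legitimate, and your argument proves the corollary as stated.
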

\begin{proof}
    The proof closely follows the technique from~\cite[Proof of Theorem 2]{dong_information-theoretic_2018}. Starting from~\Cref{prop:linear_bandits}, since $\cA \subseteq \mathbf{B}_d(0,1)$ and $\rho$ is the Euclidean distance, the logarithm of the covering number is bounded by $d \log(1+\nicefrac{2}{\varepsilon})$(see~\cite[Proof of lemma 5.13]{van_handel_probability_2016}). By setting $\varepsilon=\frac{d}{2\sqrt{T}}$ and using properties of square roots and of logarithms, we obtain the claimed result.
\end{proof}

\section{Conclusion}
In this paper, we studied the Bayesian regret of the Thompson Sampling algorithm for bandit problems. Building on the analysis from Dong and Van Roy~\cite{dong_information-theoretic_2018}, we established bounds on the TS expected regret that hold even for problems with infinite and continuous action and parameter spaces. For the linear bandit problem, our analysis recovers the near-optimal rate of $O(d \sqrt{T \log T})$ from~\cite[Theorem 2]{dong_information-theoretic_2018} and improves the constants terms inside the logarithm. A natural direction for future work is to extend our results to other classes of bandit problems, such as the ``semi-bandit'' feedback problem.

\clearpage
\bibliographystyle{IEEEtran}
\balance
\bibliography{references}

\end{document}